\newtheorem{proposition}{Proposition}
\newtheorem{corollary}{Corollary}
\newtheorem{theorem}{Theorem}
\def\R{{\mathbf R}}
\newcommand{\vs}[1]{\vspace*{-#1mm}}
\title{Generalized Denoising Auto-Encoders as Generative Models}
\newif\ifLong
\newif\ifConvergence
\author{
Yoshua Bengio, Li Yao, Guillaume Alain, and Pascal Vincent\\
Département d'informatique et recherche opérationnelle, Université de Montréal
}
\begin{document}

\maketitle
\vs{3}
\begin{abstract}
Recent work has shown how denoising and contractive autoencoders implicitly capture the structure of the data-generating density, in the case where the corruption noise is Gaussian, the reconstruction error is the squared error, and the data is continuous-valued. This has led to various proposals for sampling from this implicitly learned density function, using Langevin and Metropolis-Hastings MCMC. However, it remained unclear how to connect the training procedure of regularized auto-encoders to the implicit estimation of the underlying data-generating distribution when the data are discrete, or using other forms of corruption process and reconstruction errors. Another issue is the mathematical justification which is only valid in the limit of small corruption noise.  We propose here a different attack on the problem, which deals with all these issues: arbitrary (but noisy enough) corruption, arbitrary reconstruction loss (seen as a log-likelihood), handling both discrete and continuous-valued variables, and removing the bias due to non-infinitesimal corruption noise (or non-infinitesimal contractive penalty).
\end{abstract}

\vs{3}
\section{Introduction}
\vs{2}

Auto-encoders learn an encoder function from input to representation and a
decoder function back from representation to input space, such that the
reconstruction (composition of encoder and decoder) is good for training
examples. Regularized auto-encoders also involve some form of
regularization that prevents the auto-encoder from simply learning the
identity function, so that reconstruction error will be low at training
examples (and hopefully at test examples) but high in general.  Different
variants of auto-encoders and sparse coding have been, along with RBMs,
among the most successful building blocks in recent research in deep
learning~\citep{Bengio-Courville-Vincent-TPAMI2013}. Whereas the usefulness
of auto-encoder variants as feature learners for supervised learning can
directly be assessed by performing supervised learning experiments with
unsupervised pre-training, what has remained until recently rather unclear
is the interpretation of these algorithms in the context of pure
unsupervised learning, as devices to capture the salient structure of the
input data distribution. Whereas the answer is clear for RBMs, it is less
obvious for regularized auto-encoders. Do they completely characterize the
input distribution or only some aspect of it? For example, clustering
algorithms such as k-means only capture the modes of the distribution,
while manifold learning algorithms characterize the low-dimensional regions
where the density concentrates.

Some of the first ideas about the probabilistic interpretation of
auto-encoders were proposed by~\citet{ranzato-08}: they were viewed as
approximating an energy function through the reconstruction error, i.e.,
being trained to have low reconstruction error at the training examples and
high reconstruction error elsewhere (through the regularizer, e.g.,
sparsity or otherwise, which prevents the auto-encoder from learning the
identity function).  An important breakthrough then came, yielding a first
formal probabilistic interpretation of regularized auto-encoders as models
of the input distribution, with the work
of~\citet{Vincent-NC-2011-small}. This work showed that some denoising
auto-encoders (DAEs) correspond to a Gaussian RBM and that minimizing the
denoising reconstruction error (as a squared error) estimates the energy
function through a regularized form of score matching, with the
regularization disappearing as the amount of corruption noise goes to 0,
and then converging to the same solution as score
matching~\citep{Hyvarinen-2005}. This connection and its generalization
to other energy functions, giving rise to the general denoising
score matching training criterion, is discussed in several other
papers~\citep{Kingma+LeCun-2010,Swersky-ICML2011,Alain+Bengio-ICLR2013}.

Another breakthrough has been the development of an empirically successful
sampling algorithm for contractive auto-encoders~\citep{Rifai-icml2012-small},
which basically involves composing encoding, decoding, and noise addition
steps. This algorithm is motivated by the observation that the Jacobian
matrix (of derivatives) of the encoding function provides an estimator of a
local Gaussian approximation of the density, i.e., the leading singular
vectors of that matrix span the tangent plane of the manifold near which
the data density concentrates.  However, a formal justification for this
algorithm remains an open problem.

The last step in this development~\citep{Alain+Bengio-ICLR2013} generalized
the result from~\citet{Vincent-NC-2011-small} by showing that when a
DAE (or a contractive auto-encoder with the contraction
on the whole encode/decode reconstruction function) is trained with small
Gaussian corruption and squared error loss, it estimates the score
(derivative of the log-density) of the underlying data-generating
distribution, which is proportional to the difference between
reconstruction and input. This result does not depend on the
parametrization of the auto-encoder, but suffers from the following
limitations: it applies to one kind of corruption (Gaussian), only to
continuous-valued inputs, only for one kind of loss (squared error), and it
becomes valid only in the limit of small noise (even though in practice,
best results are obtained with large noise levels, comparable to the range
of the input). 

\ifLong
Final version, if enough space, talk about Roland Memisevic's ICML 2013 paper
\fi

What we propose here is a different probabilistic interpretation of
DAEs, which is valid for any data type, any corruption
process (so long as it has broad enough support), and any reconstruction
loss (so long as we can view it as a log-likelihood).

The basic idea is that if we corrupt observed random variable $X$ into
$\tilde{X}$ using conditional distribution ${\cal C}(\tilde{X}|X)$, we are
really training the DAE to estimate the reverse
conditional $P(X | \tilde{X})$.
Combining this estimator with the known ${\cal
  C}(\tilde{X}|X)$, we show that we can recover a consistent estimator of
$P(X)$ through a Markov chain that alternates between sampling from $P(X |
\tilde{X})$ and sampling from ${\cal C}(\tilde{X}|X)$, i.e., encode/decode,
sample from the reconstruction distribution model $P(X |\tilde{X})$, 
apply the stochastic corruption procedure  ${\cal C}(\tilde{X}|X)$, and iterate. 
\ifConvergence
Furthermore, if an efficient estimator (such as 
maximum likelihood) is used for training $P(X | \tilde{X})$, then
the estimated implicit distribution benefits from the same fast convergence.
\fi

This theoretical result is validated through experiments on artificial data
in a non-parametric setting and experiments on real data in a parametric
setting (with neural net DAEs). We find that we can improve the
sampling behavior by using the model itself to define the corruption
process, yielding a training procedure that has some surface similarity to
the contrastive divergence algorithm~\citep{Hinton99-small,Hinton06}.

\begin{algorithm}[ht]
\caption{{\sc The generalized denoising auto-encoder training algorithm}
\sl requires a training set or training distribution $\cal D$ of examples $X$,
a given corruption process ${\cal C}(\tilde{X}|X)$ from which
one can sample, and with which one trains a conditional distribution $P_\theta(X|\tilde{X})$
from which one can sample.}
\begin{algorithmic} \label{alg:DAE}
\REPEAT
  \STATE $\bullet$ sample training example $X \sim {\cal D}$
  \STATE $\bullet$ sample corrupted input $\tilde{X} \sim {\cal C}(\tilde{X}|X)$
  \STATE $\bullet$ use $(X,\tilde{X})$ as an additional training example towards
  minimizing the expected value of $- \log P_\theta(X | \tilde{X})$, e.g.,
  by a gradient step with respect to $\theta$.
\UNTIL convergence of training (e.g., as measured by 
early stopping on out-of-sample negative log-likelihood)
\end{algorithmic}
\vs{1}
\end{algorithm}

\ifLong
\section{Denoising Auto-Encoders}
\fi
\vs{3}
\section{Generalizing Denoising Auto-Encoders}
\vs{2}
\subsection{Definition and Training}
\vs{2}
Let ${\cal P}(X)$ be the data-generating distribution
over observed random variable $X$. Let ${\cal C}$ be a given corruption process that 
stochastically
maps an $X$ to a $\tilde{X}$ through conditional distribution ${\cal C}(\tilde{X}|X)$.
The training data for the generalized denoising auto-encoder is a set
of pairs $(X,\tilde{X})$ with $X \sim {\cal P}(X)$ and $\tilde{X} \sim {\cal C}(\tilde{X}|X)$.
The DAE is trained to predict $X$ given $\tilde{X}$ through
a learned conditional distribution $P_\theta(X|\tilde{X})$, by choosing this conditional
distribution within some family of distributions indexed by $\theta$, not necessarily a neural net.
The training procedure for the DAE can generally be formulated
as learning to predict $X$ given $\tilde{X}$ by possibly regularized
maximum likelihood, i.e., the generalization performance that this 
training criterion attempts to minimize is
\vs{1}
\begin{equation}
 \label{eq:true-L}
  {\cal L}(\theta) = - E[ \log P_\theta(X|\tilde{X}) ]
\end{equation}
\vs{2}
where 
the expectation is taken over the joint data-generating distribution
\vs{1}
\begin{equation}
\label{eq:joint}
 {\cal P}(X,\tilde{X}) = {\cal P}(X) {\cal C}(\tilde{X}|X).
\end{equation}
\vs{10}
\subsection{Sampling}
\vs{2}
We define the following pseudo-Gibbs Markov chain associated with $P_\theta$:
\vs{2}
\begin{align}
  X_t &\sim P_\theta(X|\tilde{X}_{t-1}) \nonumber\\
  \tilde{X}_t &\sim {\cal C}(\tilde{X}|X_t)
\end{align}
which can be initialized from an arbitrary choice $X_0$.
This is the process by which we are going to generate samples $X_t$
according to the model implicitly learned by choosing $\theta$.
We define $T(X_t | X_{t-1})$ the transition operator that defines
a conditional distribution for $X_t$ given $X_{t-1}$,
independently of $t$, so that the
sequence of $X_t$'s forms a homogeneous Markov chain.
If the asymptotic marginal distribution of the $X_t$'s exists,
we call this distribution $\pi(X)$, and we show below that
it consistently estimates ${\cal P}(X)$. 

Note that the above chain is not a proper Gibbs chain in general because
there is no guarantee that $P_\theta(X|\tilde{X}_{t-1})$ and ${\cal
  C}(\tilde{X}|X_t)$ are consistent with a unique joint distribution. In
that respect, the situation is similar to the sampling procedure for
dependency networks~\citep{HeckermanD2000}, in that the pairs
$(X_t,\tilde{X}_{t-1})$ are not guaranteed to have the same asymptotic
distribution as the pairs $(X_t,\tilde{X}_t)$ as $t \rightarrow\infty$.  As
a follow-up to the results in the next section, it is shown
in~\citet{Bengio+Laufer-arxiv-2013} that dependency networks can be cast
into the same framework (which is that of Generative Stochastic Networks),
and that if the Markov chain is ergodic, then its stationary distribution
will define a joint distribution between the random variables (here that
would be $X$ and $\tilde{X}$), even if the conditionals are not consistent
with it. 

\vs2
\subsection{Consistency}
\vs2

Normally we only have access to a finite number $n$ of training examples
but as $n\rightarrow \infty$, the empirical training distribution approaches the 
data-generating distribution. To compensate for the finite training set,
we generally introduce a (possibly data-dependent) regularizer $\Omega$
and the actual training criterion is a sum over $n$ training examples $(X,\tilde{X})$,
\begin{equation}
  {\cal L}_n(\theta) = \frac{1}{n} \sum_{X \sim {\cal P}(X),\tilde{X} \sim {\cal C}(\tilde{X}|X)}
    \lambda_n \Omega(\theta,X,\tilde{X}) - \log P_\theta(X | \tilde{X})
\end{equation}
where we allow the regularization coefficient $\lambda_n$ to be chosen according to the
number of training examples $n$, with $\lambda_n \rightarrow 0$ as $n\rightarrow \infty$.
With $\lambda_n \rightarrow 0$ we get that
${\cal L}_n \rightarrow {\cal L}$ (i.e. converges to generalization error, Eq.~\ref{eq:true-L}),
so consistent estimators of ${\cal P}(X|\tilde{X})$ stay consistent.
We define $\theta_n$ to be the minimizer of ${\cal L}_n(\theta)$ when given $n$ training examples.


We define $T_n$ to be the transition operator 
$T_n(X_t | X_{t-1}) = \int P_{\theta_n}(X_t | \tilde{X}) {\cal C}(\tilde{X}|X_{t-1}) d\tilde{X}$
associated with $\theta_n$ (the parameter
obtained by minimizing the training criterion with $n$ examples), and
define $\pi_n$ to be the asymptotic distribution of the Markov chain generated
by $T_n$ (if it exists). We also define $T$ be the operator of the Markov chain associated
with the learned model as $n \rightarrow \infty$.

\begin{theorem}
\label{thm:consistency}
If $P_{\theta_n}(X | \tilde{X})$ is a consistent estimator of the true conditional
distribution ${\cal P}(X | \tilde{X})$ and $T_n$ defines an 
ergodic Markov chain, then
as the number of examples $n\rightarrow \infty$, the asymptotic distribution $\pi_n(X)$ of the generated
samples converges to the data-generating distribution ${\cal P}(X)$. 
\ifConvergence
Furthermore, $\pi_n$ benefits
from the same rate of convergence of $P_{\theta_n}(X | \tilde{X})$
as $n\rightarrow \infty$.
\fi
\vs2
\end{theorem}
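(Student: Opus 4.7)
The plan is to introduce the ``ideal'' transition operator that one would obtain with perfect knowledge of the true reverse conditional, verify that it already has $\mathcal{P}(X)$ as a stationary distribution, and then transfer this property to $\pi_n$ by consistency and ergodicity. Concretely, define
\begin{equation*}
T^{\ast}(X_t \mid X_{t-1}) \;=\; \int \mathcal{P}(X_t \mid \tilde{X})\, \mathcal{C}(\tilde{X} \mid X_{t-1})\, d\tilde{X},
\end{equation*}
the operator obtained by replacing $P_{\theta_n}$ by the true conditional $\mathcal{P}(X \mid \tilde{X})$ induced by the joint (\ref{eq:joint}).

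First, I would show that $\mathcal{P}(X)$ is invariant under $T^{\ast}$. This is a short Bayes-rule calculation: for any $X_t$,
\begin{align*}
\int T^{\ast}(X_t \mid X_{t-1})\, \mathcal{P}(X_{t-1})\, dX_{t-1}
&= \int \mathcal{P}(X_t \mid \tilde{X}) \!\int\! \mathcal{C}(\tilde{X} \mid X_{t-1})\, \mathcal{P}(X_{t-1})\, dX_{t-1}\, d\tilde{X} \\
&= \int \mathcal{P}(X_t \mid \tilde{X})\, \mathcal{P}(\tilde{X})\, d\tilde{X} \;=\; \mathcal{P}(X_t),
\end{align*}
using $\mathcal{C}(\tilde{X} \mid X_{t-1})\mathcal{P}(X_{t-1}) = \mathcal{P}(X_{t-1}, \tilde{X})$ and marginalizing out $X_{t-1}$. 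So $\mathcal{P}(X)$ is a stationary distribution of $T^{\ast}$.

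Second, I would transfer consistency from $P_{\theta_n}$ to $T_n$. Since $P_{\theta_n}(X \mid \tilde{X}) \to \mathcal{P}(X \mid \tilde{X})$ as $n \to \infty$, and $T_n$ is obtained from $P_{\theta_n}$ by convolving against the fixed kernel $\mathcal{C}(\tilde{X} \mid X_{t-1})$, the transition kernels satisfy $T_n \to T^{\ast}$ in an appropriate mode of convergence (pointwise, and under a standard dominated-convergence argument also in total variation on bounded test functions). Thus any limit point $\pi^{\ast}$ of the sequence $\pi_n$ is a stationary distribution of $T^{\ast}$.

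Third, I would use ergodicity of $T_n$, which by the consistency just established passes in the limit to ergodicity of $T^{\ast}$, so that $T^{\ast}$ has a \emph{unique} stationary distribution. Combined with the first step, this unique distribution must be $\mathcal{P}(X)$, forcing $\pi_n \to \mathcal{P}(X)$.

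\paragraph{Main obstacle.} The delicate step is the third one: deducing convergence of stationary distributions from convergence of transition operators. This is not automatic in general — one needs some form of uniformity (e.g.\ uniform ergodicity, or an equicontinuity-type condition on the family $\{T_n\}$) to rule out pathologies where $\pi_n$ concentrates on regions where the convergence $T_n \to T^{\ast}$ is slow. I expect to invoke a standard result that if $T_n$ are ergodic transition kernels converging to $T^{\ast}$ in total variation (uniformly in the starting state, or on a dense set of test functions) and $T^{\ast}$ is itself ergodic, then $\pi_n \Rightarrow \pi^{\ast}$. The hypotheses of the theorem are implicitly assuming the regularity needed for this; a fully rigorous version would state them explicitly, but the conceptual core — invariance of $\mathcal{P}(X)$ under $T^{\ast}$ plus uniqueness via ergodicity — is exactly the two-line argument above.
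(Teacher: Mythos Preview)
Your proposal is correct and follows essentially the same three-step structure as the paper: define the ideal transition operator $T^\ast$ (the paper calls it $\mathcal{T}$), verify that $\mathcal{P}(X)$ is its stationary distribution (the paper phrases this as ``$\mathcal{T}$ is a proper Gibbs chain for the joint $\mathcal{P}(X)\mathcal{C}(\tilde{X}\mid X)$''), pass convergence $P_{\theta_n}\to\mathcal{P}(\cdot\mid\tilde{X})$ through the fixed kernel $\mathcal{C}$ to get $T_n\to T^\ast$, and conclude $\pi_n\to\mathcal{P}(X)$. For the step you flag as the main obstacle, the paper does not invoke an abstract continuity result but gives a direct operator-norm argument: with $v$ the Perron--Frobenius eigenvector of $\mathcal{T}$, one has $\|(\mathcal{T}-T_n)v\|_2\le\|\mathcal{T}-T_n\|_2\to 0$, hence $T_n v\to v$, which is read as $v$ becoming the leading eigenvector of $T_n$.
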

\begin{proof}
If $T_n$ is ergodic, then the Markov chain converges to a $\pi_n$. 
Based on our definition of the ``true'' joint (Eq.~\ref{eq:joint}),
one obtains a conditional ${\cal P}(X|\tilde{X}) \propto {\cal P}(X){\cal C}(\tilde{X}|X)$.
This conditional, along with ${\cal P}(\tilde{X}|X)={\cal C}(\tilde{X}|X)$ 
can be used to define a proper Gibbs chain where one alternatively samples
from ${\cal P}(\tilde{X}|X)$ and from ${\cal P}(X|\tilde{X})$.
Let ${\cal T}$ be the corresponding ``true'' transition operator, which
maps the $t$-th sample $X$ to the $t+1$-th in that chain.
That is, ${\cal T}(X_t | X_{t-1}) = \int {\cal P}(X_t | \tilde{X}) {\cal C}(\tilde{X}|X_{t-1}) d\tilde{X}$.
${\cal T}$ produces ${\cal P}(X)$ as asymptotic marginal distribution over $X$
(as we consider more samples from the chain) simply because ${\cal P}(X)$
is the marginal distribution of the joint ${\cal P}(X){\cal C}(\tilde{X}|X)$
to which the chain converges.
By hypothesis we have that $P_{\theta_n}(X | \tilde{X})\rightarrow {\cal P}(X | \tilde{X})$
as $n\rightarrow \infty$. Note that $T_n$ is defined exactly as ${\cal T}$ but
with ${\cal P}(X_t|\tilde{X})$ replaced by $P_{\theta_n}(X | \tilde{X})$. Hence
$T_n \rightarrow {\cal T}$ as $n\rightarrow\infty$. 

Now let us convert the convergence of $T_n$ to ${\cal T}$ into the convergence
of $\pi_n(X)$ to ${\cal P}(X)$. We will exploit the fact that for the 2-norm, matrix $M$
and unit vector $v$,
\mbox{$||M v||_2 \leq \sup_{||x||_2=1} ||M x||_2 = ||M||_2$}.
Consider $M = {\cal T}-T_n$ and $v$ the principal eigenvector of ${\cal T}$,
which, by the Perron-Frobenius theorem, corresponds to the asymptotic distribution ${\cal P}(X)$.
Since $T_n \rightarrow {\cal T}$, $||{\cal T}-T_n||_2 \rightarrow 0$.
Hence $||({\cal T} - T_n) v||_2 \leq ||{\cal T}-T_n||_2 \rightarrow 0$, which 
implies that $T_n v \rightarrow {\cal T} v = v$, where the last equality
comes from the Perron-Frobenius theorem (the leading eigenvalue is 1).
Since $T_n v \rightarrow v$, it implies that $v$ becomes the leading eigenvector
of $T_n$, i.e., the asymptotic distribution of the Markov chain, $\pi_n(X)$ converges
to the true data-generating distribution, ${\cal P}(X)$, as $n \rightarrow \infty$.
\ifConvergence
Regarding the rate of convergence, if $P_{\theta_n}(X | \tilde{X})$ converges at rate $r(n)$
(e.g., $r(n)=1/n$ for maximum likelihood estimators), it means that its covariance
approaches a fixed matrix (e.g., the Fisher information matrix) times $r(n)$.
Note that in general if random matrix $A_n$ converges at rate $r(n)$, so does its inner (matrix)
product with fixed matrix $B$ (and the equivalent when $A_n$ and $B$ are linear operators
and the inner product is an integral). Hence $T_n$ also converges at rate $r(n)$ because
$T_n(X|X')$ is the inner product of $P_{\theta_n}(X | \tilde{X})$ with ${\cal C}(\tilde{X} | X')$. 
Similarly, since $T_n$ converges at rate $r(n)$, so does the inner product of $T_n$ with
a unit vector $v$, and so does $({\cal T} - T_n) v$ approach 0 at rate $r(n)$. That makes
the leading eigenvector of $T_n$ approach the leading eigenvector of ${\cal T}$ at rate $r(n)$,
which concludes the proof.
\fi
\vs2
\end{proof}

Hence the asymptotic sampling distribution associated with the Markov chain defined by
$T_n$ (i.e., the model)
implicitly defines the distribution $\pi_n(X)$ learned by the DAE
over the observed variable $X$. Furthermore, that estimator of ${\cal P}(X)$ is consistent
so long as our (regularized) maximum likelihood estimator of the conditional $P_\theta(X|\tilde{X})$
is also consistent. We now provide sufficient conditions for the ergodicity of the chain operator
(i.e. to apply theorem~\ref{thm:consistency}).
\begin{corollary}
\label{cor:ergo}
{\bf If} $P_\theta(X | \tilde{X})$ is a consistent estimator of the true conditional
distribution ${\cal P}(X | \tilde{X})$, {\bf and} both the data-generating distribution and
denoising model are contained in and non-zero in
a finite-volume region $V$ (i.e., $\forall \tilde{X}$, $\forall X\notin V,\; {\cal P}(X)=0, P_\theta(X|\tilde{X})=0$),
{\bf and} $\forall \tilde{X}$, $\forall X\in V,\;$ \mbox{${\cal P}(X)>0$}, $P_\theta(X|\tilde{X})>0,  {\cal C}(\tilde{X}|X)>0$
{\bf and} these statements remain
true in the limit of $n\rightarrow \infty$, {\bf then}
the asymptotic distribution $\pi_n(X)$ of the generated
samples converges to the data-generating distribution ${\cal P}(X)$.
\vs2
\end{corollary}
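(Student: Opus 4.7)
The plan is to reduce the corollary to Theorem~\ref{thm:consistency} by verifying the only extra hypothesis needed there, namely that $T_n$ defines an ergodic Markov chain. All the probabilistic machinery (convergence of $\pi_n$ to ${\cal P}$ once ergodicity is in hand) is already supplied by Theorem~\ref{thm:consistency}, so the whole task is to turn the strict positivity conditions of the corollary into ergodicity of the kernel $T_n(X_t\mid X_{t-1}) = \int P_{\theta_n}(X_t\mid\tilde X)\,{\cal C}(\tilde X\mid X_{t-1})\,d\tilde X$ on the state space $V$.

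First I would observe that, by hypothesis, for every $X_{t-1},X_t\in V$ the integrand $P_{\theta_n}(X_t\mid\tilde X){\cal C}(\tilde X\mid X_{t-1})$ is strictly positive for every $\tilde X$ in (at least) the set where ${\cal C}(\tilde X\mid X_{t-1})>0$, and that set has positive Lebesgue measure in $V$ because $V$ has finite volume and the densities are positive on it. Hence $T_n(X_t\mid X_{t-1})>0$ for all $X_{t-1},X_t\in V$. In particular, the kernel admits a strictly positive density on the compact-like region $V\times V$, so after one step any measurable set $A\subset V$ with positive Lebesgue measure is reached with positive probability from every starting point in $V$. This gives irreducibility with respect to Lebesgue measure restricted to $V$, and also aperiodicity, because the one-step kernel assigns positive probability to any neighborhood of the current state (including a self-loop in the discrete analog).

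Next I would upgrade irreducibility and aperiodicity to full ergodicity by means of a one-step minorization (Doeblin) argument: since $T_n(\cdot\mid\cdot)$ is strictly positive on the product $V\times V$ and $V$ has finite volume, we can take an infimum over $V\times V$ and obtain a constant $\varepsilon_n>0$ and a probability measure $\nu_n$ on $V$ (for instance uniform on $V$ rescaled by $\varepsilon_n|V|$) such that $T_n(\cdot\mid X_{t-1})\ge \varepsilon_n\,\nu_n(\cdot)$ uniformly in $X_{t-1}\in V$. Standard Markov chain theory then yields a unique invariant distribution $\pi_n$ and uniform (geometric) ergodicity of the chain, which is the hypothesis that Theorem~\ref{thm:consistency} requires. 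Combined with the consistency of $P_{\theta_n}(X\mid\tilde X)$, invoking Theorem~\ref{thm:consistency} finishes the proof, and the remark that these positivity conditions are preserved in the limit $n\to\infty$ ensures that the same argument yields ergodicity of the limiting operator ${\cal T}$.

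The main obstacle I anticipate is the step that converts pointwise strict positivity of $T_n$ on $V\times V$ into a uniform lower bound $\varepsilon_n>0$. Pointwise positivity alone does not give a uniform bound; we need some mild joint regularity (e.g.\ measurability plus an essential infimum over the finite-volume set $V$) to be able to invoke Doeblin. In the nonparametric setting this is immediate from continuity of the densities, but for a general parametric $P_{\theta_n}$ one either assumes this mild regularity implicitly as part of ``contained in and non-zero in $V$'', or one weakens the conclusion to almost-everywhere uniform minorization, which still suffices for uniqueness of the stationary distribution and ergodicity via Harris recurrence on $V$.
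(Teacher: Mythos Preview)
Your proposal is correct and follows essentially the same route as the paper: show $T_n>0$ on $V\times V$ from the positivity hypotheses, deduce irreducibility and aperiodicity, obtain ergodicity (the paper appeals informally to recurrence from boundedness of $V$, whereas you use a Doeblin minorization, which is a cleaner way to say the same thing), and then invoke Theorem~\ref{thm:consistency}. Your self-identified obstacle about passing from pointwise positivity to a uniform lower bound is a genuine technical point, and the paper's proof glosses over it to the same extent.
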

\begin{proof}
To obtain the existence of a stationary distribution, it is sufficient to have irreducibility
(every value reachable from every other value), aperiodicity (no cycle where
only paths through the cycle allow to return to some value), and 
recurrence (probability 1 of returning eventually). These conditions
can be generalized to the continuous case, where we obtain ergodic Harris chains
rather than ergodic Markov chains.
If $P_\theta(X|\tilde{X})>0$ and ${\cal C}(\tilde{X}|X)>0$
(for $X \in V$), then $T_n(X_t | X_{t-1})>0$ as well,
because 
\[
  T(X_t | X_{t-1}) = \int P_\theta(X_t | \tilde{X}) {\cal C}(\tilde{X}|X_{t-1}) d\tilde{X}
\]
This positivity of the transition operator guarantees that one can jump from any point in
$V$ to any other point in one step, thus yielding {\em irreducibility} and {\em
  aperiodicity}. To obtain {\em recurrence} (preventing the chain from diverging to
infinity), we rely on the assumption that the domain $V$ is bounded. Note that although
$T_n(X_t | X_{t-1})>0$ could be true for any finite $n$, we need this condition to hold
for $n \rightarrow \infty$ as well, to obtain the consistency result of
theorem~\ref{thm:consistency}. By assuming this positivity (Boltzmann
distribution) holds for the data-generating distribution, we make sure that $\pi_n$ does
not converge to a distribution which puts 0's anywhere in $V$. Having satisfied all the
conditions for the existence of a stationary distribution for $T_n$ as $n\rightarrow
\infty$, we can apply theorem~\ref{thm:consistency} and obtain its conclusion.  \vs2
\end{proof}

Note how these conditions take care of the various troubling cases one
could think of. We avoid the case where there is no corruption
(which would yield a wrong estimation, with the DAE
simply learning a dirac probability its input).
Second, we avoid the case where the chain wanders to
infinity by assuming a finite volume where the model and data live, a real
concern in the continuous case. If it became a real issue, we
could perform rejection sampling to make sure that $P(X|\tilde{X})$
produces $X \in V$.

\vs{3}
\subsection{Locality of the Corruption and Energy Function}
\vs{2}
If we believe that $P(X | \tilde{X})$ is well estimated 
for all $(X,\tilde{X})$ pairs, i.e., that it is approximately consistent 
with ${\cal C}(\tilde{X} | X)$, then we get as many estimators
of the energy function as we want, by picking a particular
value of $\tilde{X}$. 

Let us define the notation $P(\cdot)$ to denote the probability
of the joint, marginals or conditionals over the pairs $(X_t,\tilde{X}_{t-1})$
that are produced by the model's Markov chain $T$ as $t \rightarrow \infty$.
So $P(X)=\pi(X)$ is the asymptotic
distribution of the Markov chain $T$, and $P(\tilde{X})$ the
marginal over the $\tilde{X}$'s in that chain.
The above assumption means that 
$P(\tilde{X}_{t-1}|X_t) \approx {\cal C}(\tilde{X}_{t-1}|X_t)$
(which is not guaranteed in general, but only asymptotically 
as $P$ approaches the true ${\cal P}$).
Then, by Bayes rule,
$P(X)= \frac{P(X|\tilde{X}) P(\tilde{X})}{P(\tilde{X}|X)}
       \approx \frac{P(X|\tilde{X}) P(\tilde{X})}{C(\tilde{X}|X)}
        \propto \frac{P(X|\tilde{X})}{C(\tilde{X}|X)}$
so that we can get an estimated energy function from any given choice of $\tilde{X}$
through  ${\rm energy}(X) \approx -\log P(X|\tilde{X}) + \log C(\tilde{X}|X)$.
where one should note that the intractable {\em partition function depends on 
the chosen value of $\tilde{X}$}.

How much can we trust that estimator and how should $\tilde{X}$ be chosen?
First note that $P(X|\tilde{X})$ has only been trained for pairs $(X,\tilde{X})$
for which $\tilde{X}$ is relatively close to $X$ (assuming that the corruption
is indeed changing $X$ generally into some neighborhood). Hence, although in theory
(with infinite amount of data and capacity) the above estimator should be good,
in practice it might be poor when $X$ is far from $\tilde{X}$. So if we pick
a particular $\tilde{X}$ the estimated energy might be good for $X$ in the
neighborhood of $\tilde{X}$ but poor elsewhere. What we could do though,
is use a different approximate energy function in different regions of
the input space. Hence the above estimator gives us a way to compare
the probabilities of nearby points $X_1$ and $X_2$ (through their difference
in energy), picking for example a midpoint $\tilde{X}=\frac{1}{2}(X_1+X_2)$.
One could also imagine that if $X_1$ and $X_N$ are far apart, we could
chart a path between $X_1$ and $X_N$ with intermediate points $X_k$
and use an estimator of the relative energies between the neighbors $X_k,X_{k+1}$,
add them up, and obtain an estimator of the relative energy
between $X_1$ and $X_N$.

\begin{figure}[ht]
\vs{4}
\centerline{\includegraphics[width=0.45\columnwidth]{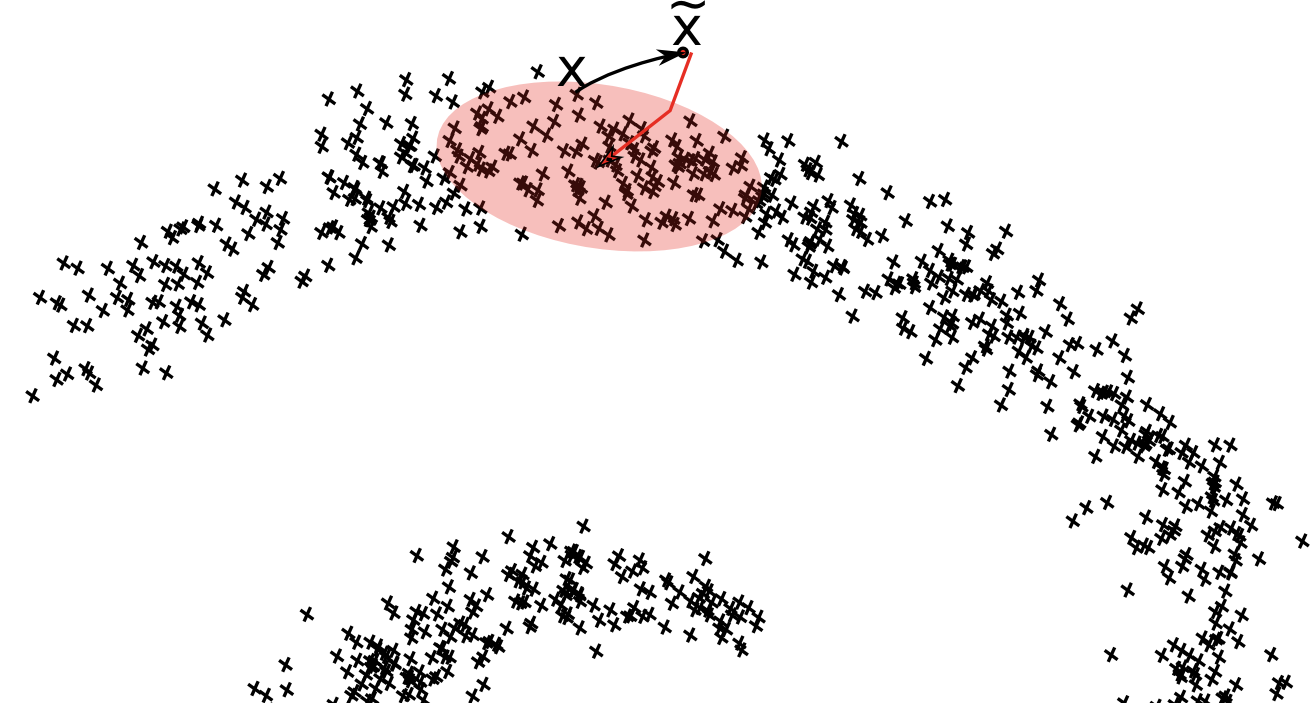}}
\vs{3}
\caption{\sl Although ${\cal P}(X)$ may be complex and multi-modal,
${\cal P}(X|\tilde{X})$ is often simple and approximately unimodal (e.g., multivariate
Gaussian, pink oval)
for most values of $\tilde{X}$ when ${\cal C}(\tilde{X}|X)$ is a local
corruption. ${\cal P}(X)$ can be seen as an infinite mixture
of these local distributions (weighted by ${\cal P}(\tilde{X})$).}
\label{fig:local-estimator}
\vs{4}
\end{figure}

This brings up an interesting point. If we could always obtain a good
estimator $P(X|\tilde{X})$ for any $\tilde{X}$, we could just train the
model with ${\cal C}(\tilde{X}|X)={\cal C}(\tilde{X})$, i.e., with an
unconditional noise process that ignores $X$. 
In that case, the estimator $P(X | \tilde{X})$
would directly equal $P(X)$ since $\tilde{X}$ and $X$ are actually
sampled independently in its ``denoising'' 
training data. We would have gained nothing over just
training any probabilistic model just directly modeling the observed
$X$'s. The gain we expect from using the denoising framework
is that if $\tilde{X}$ is a local perturbation of $X$, then the true ${\cal
  P}(X|\tilde{X})$ can be well approximated by a much simpler
distribution than ${\cal P}(X)$. See
Figure~\ref{fig:local-estimator} for a visual explanation: in the limit of
very small perturbations, one could even assume that ${\cal
  P}(X|\tilde{X})$ can be well approximated by a simple unimodal
distribution such as the Gaussian (for continuous data) or factorized
binomial (for discrete binary data) commonly used in DAEs
as the reconstruction probability function (conditioned on
$\tilde{X}$). This idea is already behind the non-local manifold Parzen
windows~\citep{Bengio-Larochelle-NLMP-NIPS-2006} and non-local manifold
tangent learning~\citep{Bengio+Monperrus+Larochelle-2006-small} algorithms:
the local density around a point $\tilde{X}$ can be approximated by a
multivariate Gaussian whose covariance matrix has leading eigenvectors that
{\em span the local tangent of the manifold} near which the data
concentrates (if it does). The idea of a locally Gaussian approximation of
a density with a manifold structure is also exploited in the more recent work on
the contractive auto-encoder~\citep{Rifai+al-2011-small} and associated
sampling procedures~\citep{Rifai-icml2012-small}. Finally, strong theoretical
evidence in favor of that idea comes from the result from \citet{Alain+Bengio-ICLR2013}:
{\em when the amount of corruption noise converges to 0 and the input variables
have a smooth continuous density, then a unimodal Gaussian reconstruction
density suffices to fully capture the joint distribution.}

Hence, although $P(X|\tilde{X})$ encapsulates all information about
${\cal P}(X)$ (assuming ${\cal C}$ given), 
it will generally have far fewer non-negligible modes, making easier to approximate it. 
This can be seen analytically
by considering the case where ${\cal P}(X)$ is a mixture of many Gaussians
and the corruption is a local Gaussian: $P(X|\tilde{X})$ remains
a Gaussian mixture, but one for which most of the modes have become negligible~\citep{Alain+Bengio-ICLR2013}.
We return to this in Section~\ref{sec:spurious}, suggesting that in order
to avoid spurious modes, it is better to have non-infinitesimal
corruption, allowing faster mixing and successful burn-in not pulled by
spurious modes far from the data.

\vs{3}
\section{Reducing the Spurious Modes with Walkback Training}
\vs{2}
\label{sec:spurious}

Sampling in high-dimensional spaces (like in experiments
below) using a simple local corruption process (such as Gaussian or
salt-and-pepper noise) suggests that
if the corruption is too local, the DAE's behavior far
from the training examples can create spurious modes in the regions
insufficiently visited during training. More training iterations or 
increasing the amount of corruption noise
helps to substantially alleviate that problem, but we discovered an even
bigger boost by {\em training
the DAE Markov chain to walk back towards the training examples}
(see Figure \ref{fig:walkback_into_drain}).
\begin{SCfigure}
\centering
\includegraphics[width=0.49\textwidth]{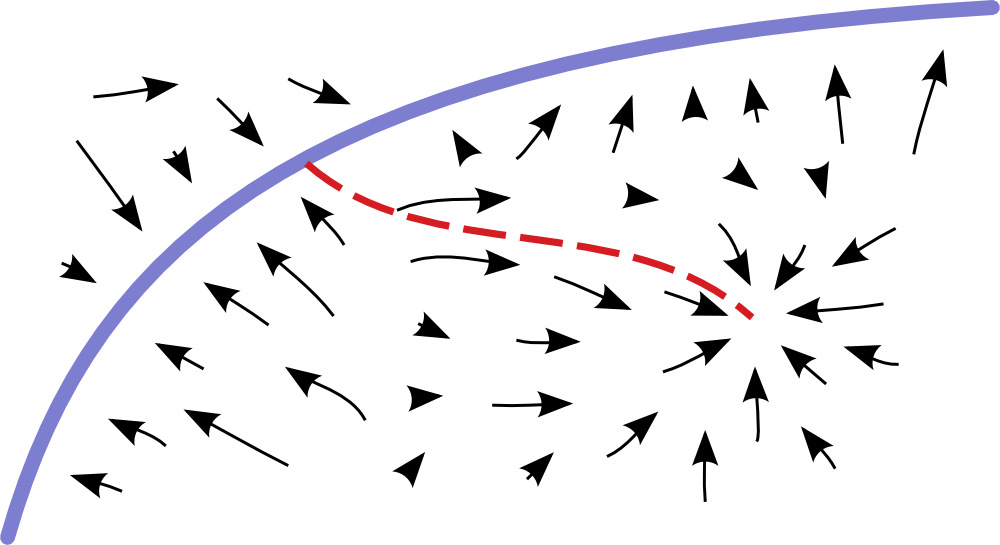}
\caption{
Walkback samples get attracted by spurious modes and contribute to removing them.
Segment of data manifold in violet and example walkback path in red dotted line,
starting on the manifold and going towards a spurious attractor.
The vector field represents expected moves of the chain,
for a unimodal $P(X| \tilde{X})$, with arrows from $\tilde{X}$ to $X$.
}
\label{fig:walkback_into_drain}
\vspace{-0.5cm}
\end{SCfigure}
We exploit
knowledge of the currently learned model $P(X|\tilde{X})$ to define the
corruption, so as to pick values of $\tilde{X}$ that would be
obtained by following the generative chain: wherever
the model would go if we sampled using the generative Markov chain starting
at a training example $X$,
we consider to be a kind of ``negative example'' $\tilde{X}$
from which the
auto-encoder should move away (and towards $X$).
The spirit of this procedure is thus very similar
to the CD-$k$ (Contrastive Divergence with $k$ MCMC steps) procedure
proposed to train RBMs~\citep{Hinton99-small,Hinton06}.

More precisely, the modified corruption process ${\cal \tilde{C}}$ we
propose is the following, based on the original corruption process $\cal
C$. We use it in a version of the training algorithm called {\bf walkback},
where we replace the corruption process ${\cal C}$ of
Algorithm~\ref{alg:DAE} by the walkback process ${\cal \tilde{C}}$ of
Algorithm~\ref{alg:walkback}. This also provides extra training examples (taking
advantage of the $\tilde{X}$ samples generated along the walk away from
$X$). It is called {\bf walkback} because it forces the DAE
to learn to walk back from the random walk it generates,
towards the $X$'s in the training set.

\begin{minipage}{\textwidth}
\rule{\linewidth}{0.4mm}
\vs{7}
\captionof{algorithm}{
{\sc The walkback algorithm} \sl is based on the walkback corruption
  process ${\cal \tilde{C}}(\tilde{X}|X)$, defined below in terms of a generic
  original corruption process ${\cal C}(\tilde{X}|X)$ and the current
  model's reconstruction conditional distribution $P(X|\tilde{X})$.  For
  each training example $X$, it provides a sequence of additional training examples
  $(X,\tilde{X}^*)$ for the DAE. It has a hyper-parameter
  that is a geometric distribution parameter $0<p<1$ controlling the
  length of these walks away from $X$, with
  $p=0.5$ by default. Training by Algorithm~\ref{alg:DAE} is the same, 
  but using all $\tilde{X}^*$ in the returned list $L$
  to form the pairs $(X,\tilde{X}^*)$ as
  training examples instead of just $(X,\tilde{X})$.  }
\vs{2}
\rule{\linewidth}{0.2mm}
\vs{2}
\begin{algorithmic}[1] \label{alg:walkback}
\vs{2}
\STATE $X^* \leftarrow X$, $L \leftarrow [\;]$
\STATE Sample $\tilde{X}^* \sim {\cal C}(\tilde{X}|X^*)$
\STATE Sample $u \sim {\rm Uniform}(0,1)$
\IF{$u > p$} 
  \STATE Append $\tilde{X}^*$ to $L$ and {\bf return} $L$
\ENDIF
\STATE If during training, append $\tilde{X}^*$ to $L$, so  $(X,\tilde{X}^*)$ will be an additional training example.
\STATE Sample $X^* \sim P(X | \tilde{X}^*)$
\STATE {\bf goto} 2.
\vs{2}
\end{algorithmic}
\rule{\linewidth}{0.2mm}
\vs{1.25}
\end{minipage}
\vs{1}

\begin{proposition}
\label{p-walkback}
Let $P(X)$ be the implicitly defined asymptotic distribution of the Markov chain alternating
sampling from $P(X|\tilde{X})$ and ${\cal C}(\tilde{X}|X)$, where ${\cal C}$
is the original local corruption process.
Under the assumptions of corollary~\ref{cor:ergo},
minimizing the training criterion 
in walkback training algorithm for generalized DAEs
(combining Algorithms~\ref{alg:DAE} and \ref{alg:walkback}) produces
a $P(X)$ that is a consistent estimator 
of the data generating distribution ${\cal P}(X)$.
\vs{1}
\end{proposition}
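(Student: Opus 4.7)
The plan is to reduce the walkback criterion to the ordinary DAE criterion of Eq.~\ref{eq:true-L} and then invoke Theorem~\ref{thm:consistency}. The central claim I would establish is that the Bayes inverse $P_\theta^\star(X|\tilde X)=\mathcal{P}(X|\tilde X)$, determined from $\mathcal{P}(X)$ and $\mathcal{C}$ by Eq.~\ref{eq:joint}, is the unique self-consistent minimizer of the walkback training criterion, so that the consistent regularized MLE $P_{\theta_n}$ converges to $P_\theta^\star$ as $n\to\infty$.

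The main step is to analyze the distribution of the pairs produced by Algorithm~\ref{alg:walkback} under the hypothesis $P_\theta=P_\theta^\star$. At this fixed point, the alternation of $\tilde X^\star\sim\mathcal{C}(\tilde X|X^\star)$ and $X^\star\sim P_\theta^\star(X|\tilde X^\star)$ is precisely a Gibbs sampler for the joint $\mathcal{P}(X)\mathcal{C}(\tilde X|X)$. Since the walk is initialized at $X\sim\mathcal{P}$ (a genuine training example), the chain sits at stationarity from the first step, and every pair appended to the list $L$ is marginally distributed as $\mathcal{P}(X)\mathcal{C}(\tilde X|X)$, regardless of where in the geometrically distributed walk it sits. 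Summing the per-step log-losses and taking expectations therefore collapses the walkback criterion to a positive multiple of $\mathcal{L}(\theta)$, which (in the non-parametric limit and by Gibbs' inequality on cross-entropy) is uniquely minimized by $P_\theta^\star$. Once $P_{\theta_n}\to P_\theta^\star$, the conclusion follows almost verbatim from the proof of Theorem~\ref{thm:consistency}: the transition operator $T_n$ built from $P_{\theta_n}$ and the \emph{original} $\mathcal{C}$ converges to the true Gibbs operator $\mathcal T$ in operator norm, and the positivity/ergodicity conditions imported from Corollary~\ref{cor:ergo} yield $\pi_n(X)\to\mathcal{P}(X)$ by continuity of the leading eigenvector.

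The main obstacle is the self-referential character of walkback training: the corruption used to generate training data is itself built from the current $P_\theta$, so a priori there could be other fixed points where a different model generates a walkback distribution for which that same model is optimal. The argument above avoids this trap by identifying the ``clean'' side of each training pair with the current walk state $X^\star$ -- which is distributed as $\mathcal{P}(X)$ precisely when $P_\theta=P_\theta^\star$ -- so that every Gibbs step of the walk replays the original DAE loss rather than some unrelated multi-step transition kernel. A complete write-up would need to rule out spurious fixed points, e.g.\ by strict convexity of the cross-entropy around $P_\theta^\star$ together with the positivity assumption of Corollary~\ref{cor:ergo}, and to confirm that the random walk length almost-surely remains finite so that the training expectation is well-defined.
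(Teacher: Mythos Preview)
There is a genuine gap arising from a misreading of Algorithm~\ref{alg:walkback}. In that algorithm every training pair is $(X,\tilde X^*)$ where $X$ is the \emph{original} training example; only the corrupted $\tilde X^*$ values are appended to $L$, and each is paired back with the same fixed $X$. Your reduction requires the clean side to be the current walk state $X^*$, so that at the putative fixed point $P_\theta^\star=\mathcal P(X\mid\tilde X)$ each appended pair is a fresh Gibbs draw from $\mathcal P(X)\mathcal C(\tilde X\mid X)$. With the clean side pinned at the original $X$, only the first pair $(X,\tilde X_1)$ has that joint law; for later steps the marginals of $X$ and $\tilde X_k$ are correct but the conditional of $X$ given $\tilde X_k$ drifts toward the unconditional $\mathcal P(X)$ as the inner Gibbs chain mixes. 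Consequently the walkback loss does \emph{not} collapse to a positive multiple of $\mathcal L(\theta)$, and $\mathcal P(X\mid\tilde X)$ (the Bayes inverse with respect to the original $\mathcal C$) is not, in general, a fixed point of the walkback criterion. Your last paragraph even flags this as the modification you would like to make, but the proposition as stated concerns the algorithm that always reconstructs the original $X$.

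The paper sidesteps this by never trying to identify the limiting $P(X\mid\tilde X)$ with the conditional induced by the \emph{original} corruption $\mathcal C$. Instead it observes that once training has converged, $P_{k}\approx P_{k+1}=P$, so the walkback corruption $\tilde{\mathcal C}$ (which is built from $P$) is itself a fixed corruption process; Corollary~\ref{cor:ergo} then applies verbatim with $\tilde{\mathcal C}$ in place of $\mathcal C$, yielding that the chain alternating $P(X\mid\tilde X)$ and $\tilde{\mathcal C}(\tilde X\mid X)$ has stationary distribution $\mathcal P(X)$. The remaining step is purely structural: a single $\tilde{\mathcal C}$-move is, by construction, several $\mathcal C$/$P$ alternations, so the chain run with $\tilde{\mathcal C}$ and the chain run with $\mathcal C$ traverse the same sequence of states and therefore share the same stationary law. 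No analysis of which conditional the walkback loss actually targets is required.
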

\vs{3}
\begin{proof}
Consider that during training, we produce a sequence of estimators
$P_k(X|\tilde{X})$ where $P_k$ corresponds to the $k$-th training iteration
(modifying the parameters after each iteration). With the walkback
algorithm, $P_{k-1}$ is used to obtain the corrupted samples $\tilde{X}$
from which the next model $P_k$ is produced. If training
converges, $P_k \approx P_{k+1} = P$ and we can then consider the whole corruption
process ${\cal \tilde{C}}$ fixed. By corollary~\ref{cor:ergo},
the Markov chain obtained by alternating samples from $P(X|\tilde{X})$
and samples from ${\cal \tilde{C}}(\tilde{X}|X)$ converges to
an asymptotic distribution $P(X)$ which estimates the underlying
data-generating distribution ${\cal P}(X)$. The walkback
corruption ${\cal \tilde{C}}(\tilde{X}|X)$ corresponds to a few
steps alternating sampling from ${\cal C}(\tilde{X}|X)$ (the fixed local
corruption)  and sampling from $P(X|\tilde{X})$. Hence the overall sequence
when using ${\cal \tilde{C}}$
can be seen as a Markov chain obtained
by alternatively sampling from ${\cal C}(\tilde{X}|X)$ and from 
$P(X|\tilde{X})$ just as it was when using merely ${\cal C}$. 
Hence, once the model is trained with
walkback, one can sample from it usingc orruption ${\cal C}(\tilde{X}|X)$.
\vs{3}
\end{proof}
A consequence is that {\em the walkback training
algorithm
estimates the same distribution as the original denoising algorithm}, but
may do it more efficiently (as we observe in the experiments),
by exploring the space of corruptions in a way that spends more time
where it most helps the model.

\ifLong
\section{Multimodal vs Unimodal Reconstruction Distribution}

PROBABLY NO ROOM FOR THAT
\fi

\vs{3}
\section{Experimental Validation}
\vs{2}

{\bf Non-parametric case.}
The mathematical results presented here
apply to any denoising training criterion where the reconstruction
loss can be interpreted as a negative log-likelihood. This
remains true whether or not the denoising machine $P(X|\tilde{X})$
is parametrized as the composition of an encoder and decoder.
This is also true of the asymptotic estimation results
in~\citet{Alain+Bengio-ICLR2013}.
We experimentally validate the above theorems in a case where the asymptotic limit
(of enough data and enough capacity) can be reached, i.e., in a
low-dimensional non-parametric setting.
Fig.~\ref{fig:histogram} shows the distribution
recovered by the Markov chain for {\bf discrete data}
with only 10 different values.  The conditional
$P(X|\tilde{X})$ was estimated by multinomial models and maximum likelihood (counting) from 5000
training examples. 5000 samples were generated from the chain to
estimate the asymptotic distribution $\pi_n(X)$.  For 
{\bf continuous data}, Figure~\ref{fig:histogram} also
shows the result of 5000 generated samples and 500 original training examples
with $X \in \R^{10}$, with scatter plots of pairs of
dimensions. The estimator is also non-parametric (Parzen density estimator
of $P(X|\tilde{X})$).
\begin{figure}[htb]
\centering
\vs{8}
\includegraphics[width=0.40\textwidth]{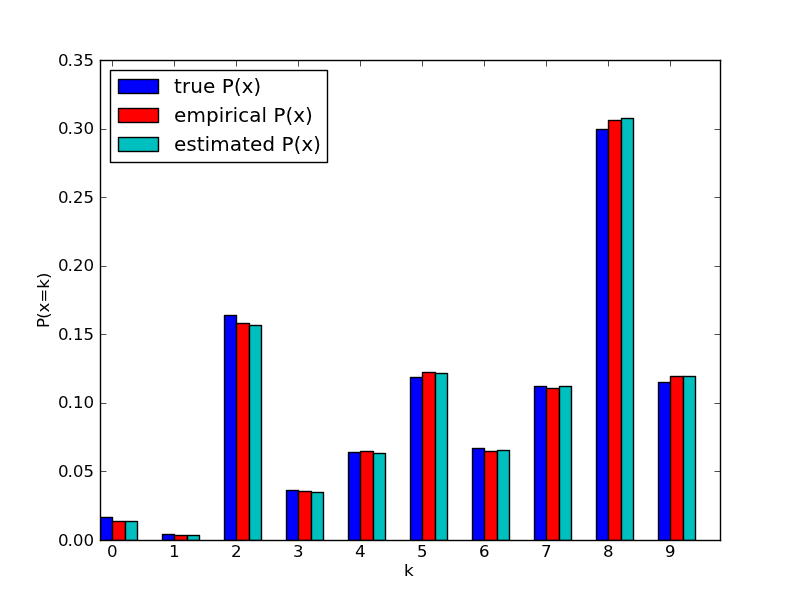}
\includegraphics[width=0.40\textwidth]{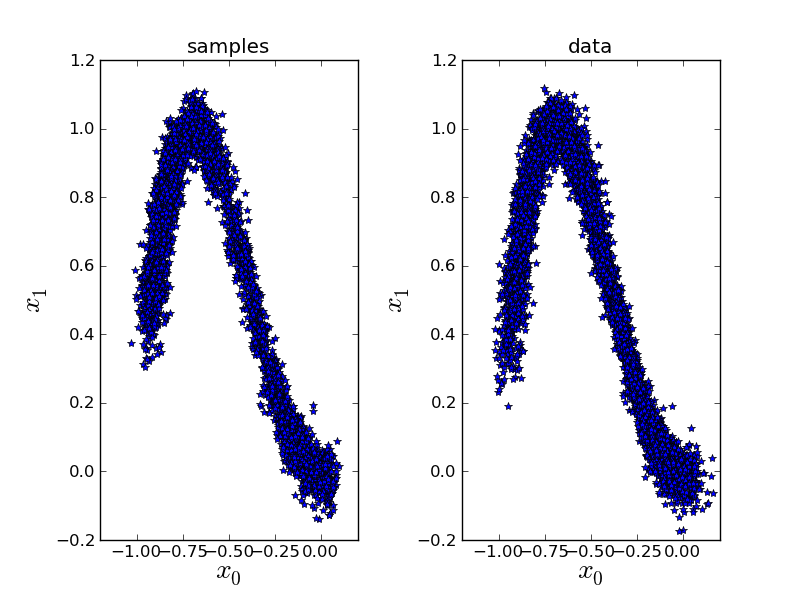}
\vs{2}
\includegraphics[width=0.40\textwidth]{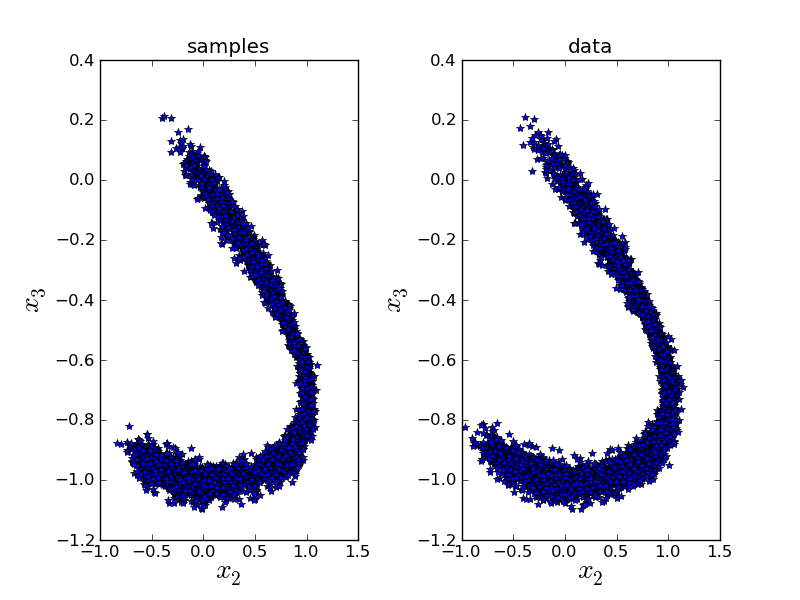}
\includegraphics[width=0.40\textwidth]{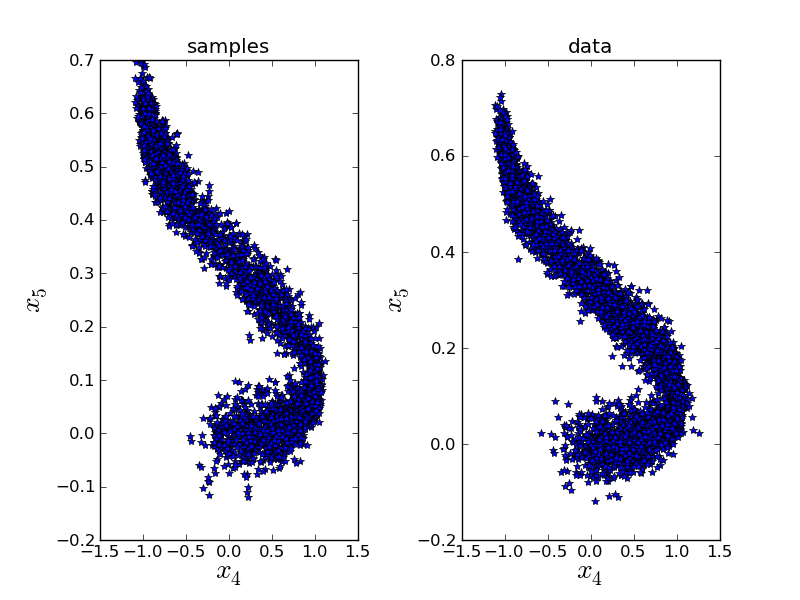}
\vs{1}
\caption{\sl Top left: histogram of a data-generating distribution (true, blue),
the empirical distribution (red), and the estimated distribution using
a denoising maximum likelihood estimator. Other figures: pairs of variables
(out of 10) showing the training samples and the model-generated samples.}
\label{fig:histogram}
\vs{4}
\end{figure}

{\bf MNIST digits.}  We trained a DAE on the
binarized MNIST data (thresholding at 0.5). 
A Theano\footnote{http://deeplearning.net/software/theano/}
 \citep{bergstra+al:2010-scipy-short} 
implementation
is available\footnote{git@github.com:yaoli/GSN.git}.
The 784-2000-784 auto-encoder is trained for 200 epochs with the 50000 training examples and salt-and-pepper noise
(probability 0.5 of corrupting each bit, setting it to 1 or 0 with
probability 0.5). It has 2000 tanh hidden units and is trained by minimizing cross-entropy loss,
i.e., maximum likelihood on a factorized Bernoulli reconstruction distribution.
With walkback training, a chain of 5
steps was used to generate 5 corrupted examples for each training
example. Figure~\ref{fig:MNIST} shows samples generated with and without walkback.
The quality of the samples was also estimated quantitatively by measuring
the log-likelihood of the test set under a non-parametric density
estimator $\hat{P}(x)={\rm mean}_{\tilde{X}} P(x|\tilde{X})$
constructed from 10000 consecutively generated samples
($\tilde{X}$ from the Markov chain). The expected value of $E[\hat{P}(x)]$
over the samples can be shown~\citep{Bengio+Yao-arxiv-2013} to be
a lower bound (i.e. conservative estimate) of the true (implicit) model density $P(x)$.
The test set log-likelihood bound
was not used to select among model architectures, but visual inspection of
samples generated did guide the preliminary search reported here.
Optimization hyper-parameters (learning rate, momentum, and
learning rate reduction schedule) were selected based on the 
training objective. We compare against a state-of-the-art RBM~\citep{NECO_cho_2013_enhanced}
with an AIS log-likelihood estimate of -64.1 (AIS estimates tend to be optimistic).
We also drew samples from the RBM and applied the same estimator (using the mean of the RBM's $P(x|h)$ with $h$
sampled from the Gibbs chain), and obtained a log-likelihood non-parametric bound of -233,
skipping 100 MCMC steps between samples (otherwise
numbers are very poor for the RBM, which does not mix at all).
The DAE log-likelihood bound
with and without walkback is respectively -116 and -142,
confirming visual inspection suggesting that
the walkback algorithm produces less spurious samples. However, the
RBM samples can be improved by a spatial blur. By tuning the amount of
blur (the spread of the Gaussian convolution), we obtained a bound of -112
for the RBM. Blurring did not help the auto-encoder. 

\begin{figure}[htb]
\centering
\vs{2}
\hspace*{-1mm}\includegraphics[width=0.5\textwidth]{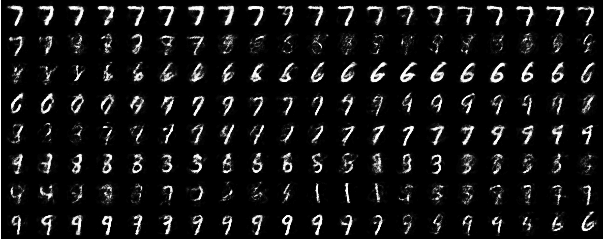} \hspace*{-1mm} \includegraphics[width=0.5\textwidth]{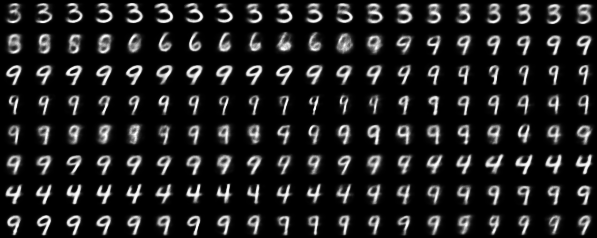}
\vs{5}
\caption{\sl Successive samples generated by Markov chain associated with
the trained DAEs
according to the plain sampling scheme (left) and walkback sampling scheme (right).
There are less ``spurious'' samples with the walkback algorithm.}
\label{fig:MNIST}
\vs{2}
\end{figure}

\vs{3}
\section{Conclusion and Future Work}
\vs{3}


We have proven that training a model to denoise is a way to implicitly
estimate the underlying data-generating process, and that a simple Markov
chain that alternates sampling from the denoising model and from the
corruption process converges to that estimator. This provides a means
for generating data from any DAE (if the corruption
is not degenerate, more precisely, if the above chain converges).
We have validated those results empirically, both in a non-parametric
setting and with real data. This study has
also suggested a variant of the training procedure,
{\em walkback training}, which seem to converge
faster to same the target distribution.

One of the insights arising out of the theoretical results presented here
is that in order to reach the asymptotic limit of fully capturing the data
distribution ${\cal P}(X)$, it may be necessary for the model's $P(X |
\tilde{X})$ to have the ability to represent multi-modal distributions over
$X$ (given $\tilde{X}$). 

\ifnipsfinal
\subsubsection*{Acknowledgments}
\vspace*{-1.5mm}

The authors would acknowledge input from
A. Courville, I. Goodfellow, R. Memisevic, K. Cho
as well as funding from NSERC, CIFAR (YB is a CIFAR Fellow),
and Canada Research Chairs.
\vspace*{-1.5mm}
\fi



\bibliography{strings,strings-shorter,ml,aigaion-shorter}
\bibliographystyle{natbib}

\end{document}